\theoremstyle{definition}
\newtheorem{theorem}{Theorem}[section]
\title{Permutation invariant networks to learn Wasserstein metrics}
\author{Arijit Sehanobish \\
  Internal Medicine (Cardiology) and Computer Science\\
 Yale University\\
  \texttt{arijit.sehanobish@yale.edu}\\
\and
Neal G. Ravindra \\
Internal Medicine (Cardiology) and Computer Science\\
 Yale University\\
  \texttt{neal.ravindra@yale.edu}\\
  \and 
  David van Dijk\\
  Internal Medicine (Cardiology) and Computer Science\\
 Yale University\\
  \texttt{david.vandijk@yale.edu}
  }
\begin{document}

\maketitle

\begin{abstract}

Understanding the space of probability measures on a metric space equipped with a Wasserstein distance is one of the fundamental questions in mathematical analysis. The Wasserstein metric has received a lot of attention in the machine learning community especially for its principled way of comparing distributions. In this work, we use a permutation invariant network to map samples from probability measures into a low-dimensional space such that the Euclidean distance between the encoded samples reflects the Wasserstein distance between probability measures. We show that our network can generalize to correctly compute distances between unseen densities. We also show that these networks can learn the first and the second moments of probability distributions. 
    
\end{abstract}

\section{Introduction}

The Wasserstein distance is a distance function between probability measures on a metric space $\mathcal{X}$. It is a natural way to compare the probability distributions of two variables $X$ and $Y$, where one variable is derived from the other by small, non-uniform perturbations, while strongly reflecting the metric of the underlying space $\mathcal{X}$. It can also be used to compare discrete distributions. The Wasserstein distance enjoys a number of useful properties, which likely contributes to its wide-spread interest amongst mathematicians and computer scientists~\cite{Bobkov2019OnedimensionalEM, spaceprob, bigot2013geodesic, canas2012learning, delbarrio1999, givens1984, VillaniOT1, VillaniOT2, arjovsky2017wasserstein}. However, despite it's broad use, the Wasserstein distance has several problems. For one, it is computationally expensive. Second, the Wasserstein distance is not Hadamard differentiable, which can present serious challenges when trying to use it in machine learning. Third, the distance is not robust. To alleviate these problems, one can use various regularized entropies to compute an approximation of this Wasserstein distance. Such an approach is more tractable and also enjoys several nice properties~\cite{altschuler2018nearlinear, cuturi2013sinkhorn, peyre2020computational}. 

In this short article, we are interested in learning about the Wasserstein space of order $p$, i.e. an infinite dimensional space of all probability measures with up to $p$-th order finite moments on a complete and separable metric space $\mathcal{X}$. More specifically, we asked,  (\textbf{1}) can we propose a neural network that correctly computes the Wasserstein distance between $2$ measures, even if both of them are not in our training examples? (\textbf{2}) What properties of the measures does such a network learn? For example, does it learn something about the moments of these distributions? (\textbf{3}) What properties of the original Wasserstein space can we preserve in our encoded space? 

There has been a lot of work in understanding the space of Gaussian processes~\cite{2wasnipsgauss, takatsu2011} but our work is more similar to, which attempts to understand Wasserstein spaces with neural networks~\cite{courty2017learning, frogner2019learning}. Like~\cite{courty2017learning}, we use a Siamese network to compare and contrast various densities but the questions we address in this article are different than that of~\cite{courty2017learning}. Furthermore, we try to approximate the Wasserstein space by learning a mapping from the space to a low dimensional Euclidean space, unlike~\cite{frogner2019learning}, where they learn a mapping from an Euclidean space to the Wasserstein space.
\section{Theory} Let $\mathcal{X}$ be a complete and separable metric space. For simplicity, we take $\mathcal{X}$ to be $\mathbb{R}^{n}$ or a closed and bounded subset of $\mathbb{R}^{n}$. Let $\mathbb{P}(\mathcal{X})$ be the space of all probability measures on $\mathcal{X}$. One can endow the space $\mathbb{P}(\mathcal{X})$ with a family of metrics called the Wasserstein metrics $W_p$.  
\begin{equation}\label{eqn:ot}
    W_p(\mu, \nu) = \text{inf}_{\substack{X\sim \mu \\ Y\sim\nu}} \mathbb{E}(|X-Y|^p)^{1/p}, \ \ \ p\geq 1
\end{equation}
We use the notations $W_p(X,Y)$ and $W_p(\mu,\nu)$ interchangeably whenever $X\sim \mu$ and $Y \sim \nu$. We also assume that $\mathbb{E}(|X|^p)$ (and $\mathbb{E}(|Y|^p)$) is finite. Most of the following properties regarding the space $\mathbb{P}(\mathcal{X})$ and $W_p$ are well-known but we summarize them for the convenience of the reader~\cite{Santambrogio2015, statwass}. 
\begin{theorem}\label{wass_prop} (\textbf{i}) $\mathbb{P}(\mathcal{X})$ equipped with $W_p$ is a complete and separable metric space.\\
(\textbf{ii}) If $X$ and $Y$ are degenerate at $x,y \in \mathcal{X}$, then $W_p(X,Y) = |x-y|$.\\
(\textbf{iii}) (Scaling law) For any $a \in \mathbb{R}$, $W_p(aX,aY) = |a|W_p(X,Y)$. \\
(\textbf{iv}) (Translation invariance) For any $x \in \mathcal{X}$, $W_p(X+x,Y+x) = W_p(X,Y)$ \\
(\textbf{v}) $\mathbb{P}(\mathcal{X})$ is flat metric space under $W_1$ and $W_2$ if $X= \mathbb{R}$ but the sectional curvature is non-negative under $W_2$.
\end{theorem}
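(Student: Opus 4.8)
The plan is to prove the three curvature claims by separate mechanisms, since one-dimensional and higher-dimensional base spaces behave differently, with the common engine being the explicit description of $W_p$-geodesics through optimal transport: monotone rearrangement when $\mathcal{X}=\mathbb{R}$, and McCann displacement interpolation in general.

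For the flatness of $(\mathbb{P}(\mathbb{R}),W_2)$ I would use the one-dimensional transport formula. On $\mathbb{R}$ the optimal coupling is the comonotone one, so with $Q_\mu = F_\mu^{-1}$ the quantile function one has
$$W_2(\mu,\nu)^2 = \int_0^1 \bigl(Q_\mu(t)-Q_\nu(t)\bigr)^2\,dt.$$
Thus $\mu \mapsto Q_\mu$ is an isometric embedding of $(\mathbb{P}(\mathbb{R}),W_2)$ onto the convex cone of non-decreasing functions inside the Hilbert space $L^2([0,1])$; a convex subset of a Hilbert space has identically vanishing sectional curvature, with geodesics being straight segments, which is exactly flatness. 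The same scheme handles $W_1$ through the identity $W_1(\mu,\nu)=\int_{\mathbb{R}}|F_\mu(x)-F_\nu(x)|\,dx$, giving an isometric embedding $\mu\mapsto F_\mu$ into $L^1(\mathbb{R})$ with convex image; here ``flat'' must be read in the weaker sense appropriate to a non-Hilbert normed target, where geodesics are affine but Alexandrov sectional-curvature bounds are not the natural invariant. Parts (\textbf{iii}) and (\textbf{iv}) are consistent with these models, since scaling and translation act affinely on $Q_\mu$ and $F_\mu$.

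For the non-negative curvature of $(\mathbb{P}(\mathbb{R}^n),W_2)$ I would verify the Alexandrov comparison inequality
$$W_2(\sigma,\mu_t)^2 \ge (1-t)\,W_2(\sigma,\mu_0)^2 + t\,W_2(\sigma,\mu_1)^2 - t(1-t)\,W_2(\mu_0,\mu_1)^2$$
for every displacement geodesic $\mu_t=(e_t)_{\#}\Pi$, where $\Pi$ is an optimal plan from $\mu_0$ to $\mu_1$ and $e_t(x_0,x_1)=(1-t)x_0+tx_1$. The starting point is the pointwise Euclidean identity $|z-x_t|^2=(1-t)|z-x_0|^2+t|z-x_1|^2-t(1-t)|x_0-x_1|^2$ for $x_t=(1-t)x_0+tx_1$. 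I would take the optimal coupling between $\sigma$ and $\mu_t$ and, via the gluing lemma, lift it to a joint law of $(Z,X_0,X_1)$ whose $(X_0,X_1)$-marginal is $\Pi$ and whose $(Z,X_t)$-marginal is that optimal coupling. Integrating the identity then yields equality on the $W_2(\sigma,\mu_t)^2$ term while the two remaining terms are bounded below by $W_2(\sigma,\mu_0)^2$ and $W_2(\sigma,\mu_1)^2$, producing the inequality in the correct direction.

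The hard part is this third step, for two reasons. First, the direction of the inequality is fragile: one must optimize precisely the $(\sigma,\mu_t)$ coupling and leave the $(\sigma,\mu_0)$ and $(\sigma,\mu_1)$ couplings sub-optimal, since the symmetric-looking choice of optimizing a different pair produces a bound pointing the wrong way. Second, the lift of the optimal plan through $e_t$ requires a measurable gluing and disintegration along the shared $X_t$-variable, which is where the Polish-space hypotheses and the non-branching of displacement interpolation enter. I would therefore present the one-dimensional flatness in full, since the Hilbert embedding makes it transparent, and either carry out or cite the gluing argument for the general non-negative-curvature statement; I would also flag that for $n\ge 2$ the space is genuinely only non-negatively curved and not flat, because the reverse comparison fails, so the apparent tension between ``flat under $W_2$'' and ``non-negative curvature under $W_2$'' is resolved by dimension.
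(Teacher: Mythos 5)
The paper does not actually prove this theorem; it defers everything to Sections 2 and 4 of the cited survey of Panaretos and Zemel, so there is no internal argument to compare against. Measured on its own terms, what you do prove is correct and is the standard route for part (\textbf{v}): the quantile embedding $\mu\mapsto F_\mu^{-1}$ onto a convex cone in $L^2([0,1])$ does give flatness of $(\mathbb{P}(\mathbb{R}),W_2)$, your Alexandrov comparison inequality points in the right direction for non-negative curvature, and the two issues you flag — that you must optimize the $(\sigma,\mu_t)$ coupling and leave the endpoint couplings sub-optimal, and that the gluing/disintegration along the shared $e_t$-variable is where the Polish-space and non-branching hypotheses do real work — are exactly the places where a careless write-up goes wrong. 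Your caveat that the $W_1$ statement lands in $L^1$, where Alexandrov sectional curvature is not the right invariant, is also a fair criticism of the theorem as stated.

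The genuine gap is coverage: you prove only part (\textbf{v}) and dismiss (\textbf{iii}), (\textbf{iv}) with the remark that scaling and translation act affinely on $Q_\mu$ and $F_\mu$, which only works for $\mathcal{X}=\mathbb{R}$, whereas the theorem asserts them for general $\mathcal{X}$ (and the paper uses them for $2$D Gaussians). The dimension-free argument is one line — $(x,y)\mapsto(ax,ay)$ and $(x,y)\mapsto(x+z,y+z)$ are bijections of $\Pi(\mu,\nu)$ onto $\Pi(a_\#\mu,a_\#\nu)$ and $\Pi(\tau_{z\#}\mu,\tau_{z\#}\nu)$ that rescale or preserve the cost — but it should be said. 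Part (\textbf{ii}) is immediate (the only coupling of two Dirac masses is their product), yet part (\textbf{i}) is not: the triangle inequality for $W_p$ needs the same gluing lemma you invoke later, separability needs density of finitely supported measures with rational data, and completeness needs a tightness argument for $W_p$-Cauchy sequences. None of this is hard, but a proof of the theorem that omits (\textbf{i}) entirely is incomplete, and (\textbf{i}) is the one part that cannot be waved through by analogy with the one-dimensional picture.
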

\begin{proof}
See~Section 2 and Section $4$ in \cite{statwass}. 
\end{proof}
For a class of random variables on a compact set $\mathcal{X}$, all metrics $W_p$, $p > 1$ are equivalent. 
\begin{theorem}(Topology generated by $W_p$) (\textbf{i}) If $\mathcal{X} \subset \mathbb{R}^n$  is compact and $p \in [1,\infty)$, in the space $\mathbb{P}(\mathcal{X})$, we have $\mu_k \rightarrow \mu$  iff $W_p (\mu_k , \mu) \rightarrow 0$. \\
(\textbf{ii}) If $\mathcal{X} = \mathbb{R}^n$, then $W_p (\mu_k , \mu) \rightarrow 0$ iff $\mu_k \rightarrow \mu$  and $\int |x|^p d\mu_k \rightarrow \int |x|^p d\mu$
\end{theorem}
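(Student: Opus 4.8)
The plan is to prove part (\textbf{ii}) in full generality on $\mathbb{R}^n$ and then obtain part (\textbf{i}) as an immediate corollary: on a compact $\mathcal{X}$ the function $x \mapsto |x|^p$ is bounded and continuous, so the moment condition $\int |x|^p \, d\mu_k \to \int |x|^p \, d\mu$ is automatic once $\mu_k \to \mu$ weakly, and the two-sided condition in (\textbf{ii}) collapses to weak convergence alone. Throughout, $\mu_k \to \mu$ is read as weak convergence (testing against bounded continuous functions), so the whole content reduces to the equivalence asserted in (\textbf{ii}).

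For the forward direction of (\textbf{ii}), assume $W_p(\mu_k, \mu) \to 0$. Weak convergence follows from the chain $W_1 \le W_p$, where the inequality comes from Jensen applied to an optimal coupling, together with Kantorovich--Rubinstein duality identifying $W_1$ convergence with convergence of $\int f \, d\mu_k$ for every $1$-Lipschitz $f$; rescaling extends this to all bounded Lipschitz $f$, and the bounded-Lipschitz functions metrize weak convergence on the separable space $\mathbb{R}^n$. For the moment statement I would use that the only coupling of $\mu$ with the Dirac mass $\delta_0$ is the product measure, whence $W_p(\mu, \delta_0) = \left(\int |x|^p \, d\mu\right)^{1/p}$ (a one-sided version of Theorem~\ref{wass_prop}(\textbf{ii})); the reverse triangle inequality then gives $\left| W_p(\mu_k, \delta_0) - W_p(\mu, \delta_0) \right| \le W_p(\mu_k, \mu) \to 0$, which is exactly $\int |x|^p \, d\mu_k \to \int |x|^p \, d\mu$.

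The reverse direction is the heart of the argument and the step I expect to be the main obstacle. Assume $\mu_k \to \mu$ weakly and $\int |x|^p \, d\mu_k \to \int |x|^p \, d\mu$. The clean route is Skorokhod's representation theorem: since $\mathbb{R}^n$ is Polish and $\mu_k \to \mu$ weakly, I can realize the measures as random variables $X_k \sim \mu_k$ and $X \sim \mu$ on a common probability space with $X_k \to X$ almost surely. Because $(X_k, X)$ is a valid coupling, $W_p^p(\mu_k, \mu) \le \mathbb{E}\,|X_k - X|^p$, and $|X_k - X|^p \to 0$ almost surely. It remains to upgrade this almost-sure convergence to convergence of expectations, and this is precisely where the moment hypothesis enters: from the domination $|X_k - X|^p \le 2^{p-1}\!\left(|X_k|^p + |X|^p\right)$ and $\mathbb{E}\,|X_k|^p \to \mathbb{E}\,|X|^p$, a generalized dominated convergence theorem (Pratt's lemma, equivalently Scheff\'e) forces $\mathbb{E}\,|X_k - X|^p \to 0$, hence $W_p(\mu_k, \mu) \to 0$.

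The delicate point throughout is that weak convergence alone allows mass to escape to infinity, so the tails cannot be controlled by it; the moment convergence is exactly what supplies the uniform integrability of $|X_k|^p$ needed to close the dominated-convergence step. An alternative to the Skorokhod argument would be to first prove the auxiliary lemma that weak convergence plus convergence of $p$-th moments yields $\int \phi \, d\mu_k \to \int \phi \, d\mu$ for every continuous $\phi$ with $|\phi(x)| \le C(1 + |x|^p)$ (via truncation and a de la Vall\'ee Poussin uniform-integrability bound) and then invoke Kantorovich duality, but I find the coupling route more economical. Finally, part (\textbf{i}) follows by specialization: on compact $\mathcal{X}$ every measure has bounded support, $|x|^p$ is a bounded continuous test function, the moment condition is subsumed by weak convergence, and we are left with $\mu_k \to \mu \iff W_p(\mu_k, \mu) \to 0$.
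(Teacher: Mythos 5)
Your proof is correct. Note, however, that the paper does not actually prove this theorem: its ``proof'' is a one-line pointer to Theorems 5.10 and 5.11 of Santambrogio's book, so any self-contained argument is necessarily a different route. Your argument is the standard one, closest in spirit to Villani's treatment: the forward direction via $W_1 \le W_p$ (Jensen on an optimal coupling) plus Kantorovich--Rubinstein duality for weak convergence, and the reverse triangle inequality against $\delta_0$ for the moments (using that the only coupling with a Dirac is the product, a one-sided version of part (\textbf{ii}) of Theorem~\ref{wass_prop}); the reverse direction via Skorokhod representation, the coupling bound $W_p^p(\mu_k,\mu) \le \mathbb{E}|X_k - X|^p$, and Pratt's generalized dominated convergence with the dominating sequence $2^{p-1}(|X_k|^p + |X|^p)$, whose integrals converge precisely because of the moment hypothesis. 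This is where the uniform integrability of $|X_k|^p$ is smuggled in, and you correctly identify it as the crux. Santambrogio's own proofs are organized differently --- the compact case is handled first via duality and an equicontinuity/compactness argument on test functions, and the noncompact case is then reduced to it by truncation, with the moment condition controlling the escaping mass --- but the two routes establish the same equivalences, and yours has the advantage of being shorter and of deriving part (\textbf{i}) as a genuine corollary of part (\textbf{ii}) rather than the other way around. One small point worth making explicit in a writeup: for that reduction you implicitly identify $W_p$ on $\mathbb{P}(\mathcal{X})$ for compact $\mathcal{X} \subset \mathbb{R}^n$ with the restriction of $W_p$ on $\mathbb{P}(\mathbb{R}^n)$ to measures supported in $\mathcal{X}$, and likewise for weak convergence; both identifications hold but deserve a sentence.
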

\begin{proof}
See proofs associated with Theorem $5.10$ and Theorem $5.11$ in~\cite{Santambrogio2015}.
\end{proof}

The measures $\mu$ and $\nu$ are rarely known in practice. Instead, one has access to finite samples $\{x_i\} = X \sim \mu$ and $\{y_j\}= Y \sim \nu$. We then construct discrete measures $\mu := \sum_{i=1}^{n}   a_i\delta_{x_{i}}$ and  $\nu := \sum_{j=1}^{m}b_j\delta_{y_{j}}$ where $a, b$ are vectors in the probability simplex, and the pairwise costs can be compactly represented as an $n \times m$ matrix $C$, i. e., $c_{ij} := c(x_i, y_j)$ where $c$ is the metric of the underlying space $\mathcal{X}$. Since the marginals here are fixed to be the laws of $X$ and $Y$ , the problem is to find a copula~\cite{Sklar1959FonctionsDR} that couples X and Y together as “tightly” as possible in an $L^p$-sense, on average; if $p = 2$ then the copula one seeks is the one that maximizes the correlation (or covariance) between $X$ and $Y$ , i.e., the copula inducing maximal linear dependence. Solving the above problem scales cubically on the sample sizes and is extremely difficult in practice. Adding an entropy regularization,
leads to a problem that can be solved much more efficiently~\cite{altschuler2018nearlinear,cuturi2013sinkhorn, peyre2020computational}. In this article, we use the Sinkhorn distance $SD^{\lambda}_{p}$ and their computation, as in~\cite{cuturi2013sinkhorn}. For more details about the entropic regularization, please see Appendix C. The Sinkhorn distance however is not a true metric~\cite{cuturi2013sinkhorn} and fails to satisfy $SD^{\lambda}_p(X,X) = 0$. The technical workaround this issue is explained in Appendix D. Moreover, the Sinkhorn distance requires discretizing the space, which alters the metric. The goal of this paper is to see how well a neural network, trained using the Sinkhorn distance, can capture the topology of the $\mathbb{P}(\mathcal{X})$ under $W_p$.
\begin{figure}[h]
  \centering
  \includegraphics[width=.85\linewidth]{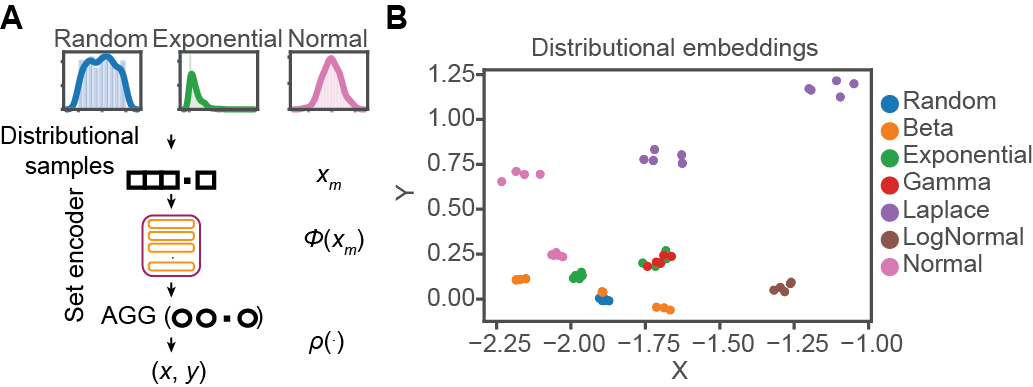}
  \caption[0.85\textwidth]{(\textbf{A}) Our distributional encoder. (\textbf{B}) Low-dimensional embedding of encoded distributions.}
  \label{fig:modelfig}
\end{figure}

\section{Neural Networks to understand \texorpdfstring{$\mathbb{P}(\mathcal{X})$}{Lg}}

We draw random samples with replacement of size $N$ from various distributions in $\mathbb{P}(\mathcal{X})$. For technical reasons (described in Appendix D), we only use continuous distributions during the training process. We use the DeepSets architecture~\cite{zaheer2017deep} to encode this set of $N$ elements as we want an encoding that is invariant of the permutations of the samples. More precisely, if $X \sim \mu$ and $Y \sim \nu$, ($\mu =\nu$ is allowed, but $X$ and $Y$ are drawn independently) and we denote the set of samples drawn from $\mu$ as $S_X$ (similarly $S_{\nu}$), we train the encoder $H_{\theta}$ such that, 
\begin{equation}
    ||H_{\theta}(S_X)-H_{\theta}(S_Y)|| = SD^{\lambda}_{p}(\mu,\nu)
\end{equation}
Thus, the loss function becomes, 
\begin{equation}
    L_{wass} = \frac{1}{\binom {m}{2}} \sum (||H_{\theta}(S_X)-H_{\theta}(S_Y)|| - SD^{\lambda}_{p}(\mu,\nu))^2
\end{equation}
where $m$ is the size of the mini-batch and we pick $2$ sets at random from the mini-batch to compare distances. One can think of our network as a Siamese Network~\cite{Koch2015SiameseNN} with a DeepSet backbone which allows us to compare and contrast samples drawn from same or different distributions. Our work can be thought as \textit{next-generation functional data analysis}~\cite{wang2015review} (Section 6). 
More details about the network architecture can be found in Appendix B. The code is available at \url{https://github.com/arijitthegame/encoding_wasserstein_metrics}.

\subsection{Regularizers for ensuring better properties}
If $X'= X+x$, then $S_{X'}$ is a  set of samples $X$ after translation $x$ (this similarly applies for $Y'$ and $S_{Y'}$). To ensure the properties of $W_p$ are reflected in our computed Euclidean distance, we demand that, 
\begin{enumerate}
    \item $||H_{\theta}(S_X')-H_{\theta}(S_Y')|| = ||H_{\theta}(S_X)-H_{\theta}(S_Y)||$ 
    \item $||H_{\theta}(S_{aX})-H_{\theta}(S_{aY})|| = |a|||H_{\theta}(S_X)-H_{\theta}(S_Y)||$.
\end{enumerate}

These constraints comprise the loss function 
\begin{multline*}
\mathcal{L} : = L_{Wass} + \frac{1}{\binom {m}{2}} \sum((||H_{\theta}(S_X')-H_{\theta}(S_Y')|| -  ||H_{\theta}(S_X)-H_{\theta}(S_Y)||)^2 \\
     + (||H_{\theta}(S_{aX})-H_{\theta}(S_{aY})|| - |a|||H_{\theta}(S_X)-H_{\theta}(S_Y)||))^2
\end{multline*}
\section{Experiments}
In this section we will describe our toy examples and show the discriminative behavior of our Neural Networks and the interesting properties of the space it can uncover. Our datasets are the following: \textbf{(1)} Random samples of size $500$ drawn independently about $50$ times from uniform, Normal, Beta, Gamma, Exponential, Laplace, Log Normal and mixtures of Gaussian distributions with varying parameters. \textbf{(2)} Random samples of size $300$ drawn independently about $100$ times from $2D$ Normal distributions with various $\mu, \Sigma$.
Fig~\ref{fig:modelfig} (\textbf{B}) shows the embedding our datasets by our model. In Fig~\ref{fig:correlationplot}, we show how well the neural network approximates the Sinkhorn distances from samples drawn from our test densities. All the results shown here are with the $W_1$ metric. Other plots showing how well our network respects the scaling law (\textbf{iii}) in Theorem~\ref{wass_prop} and the results with the $W_2$ metric are shown in the Appendix A. Detailed quantitative results can be found in Appendix F. Some of our results with the $W_2$ metric are weaker than the ones with the $W_1$ metric. This may be due to the following reasons: $\mathbb{P}(\mathcal{X})$ under $W_2$ is no longer flat and the Sinkhorn distance changes the metric differently than it changes the space under $W_1$; secondly, since our target is an Euclidean space which is a flat space, we are losing more structural information when mapping from the $2$-Wasserstein space. \\
\begin{figure}[h!]
  \centering
  \includegraphics[width=\linewidth]{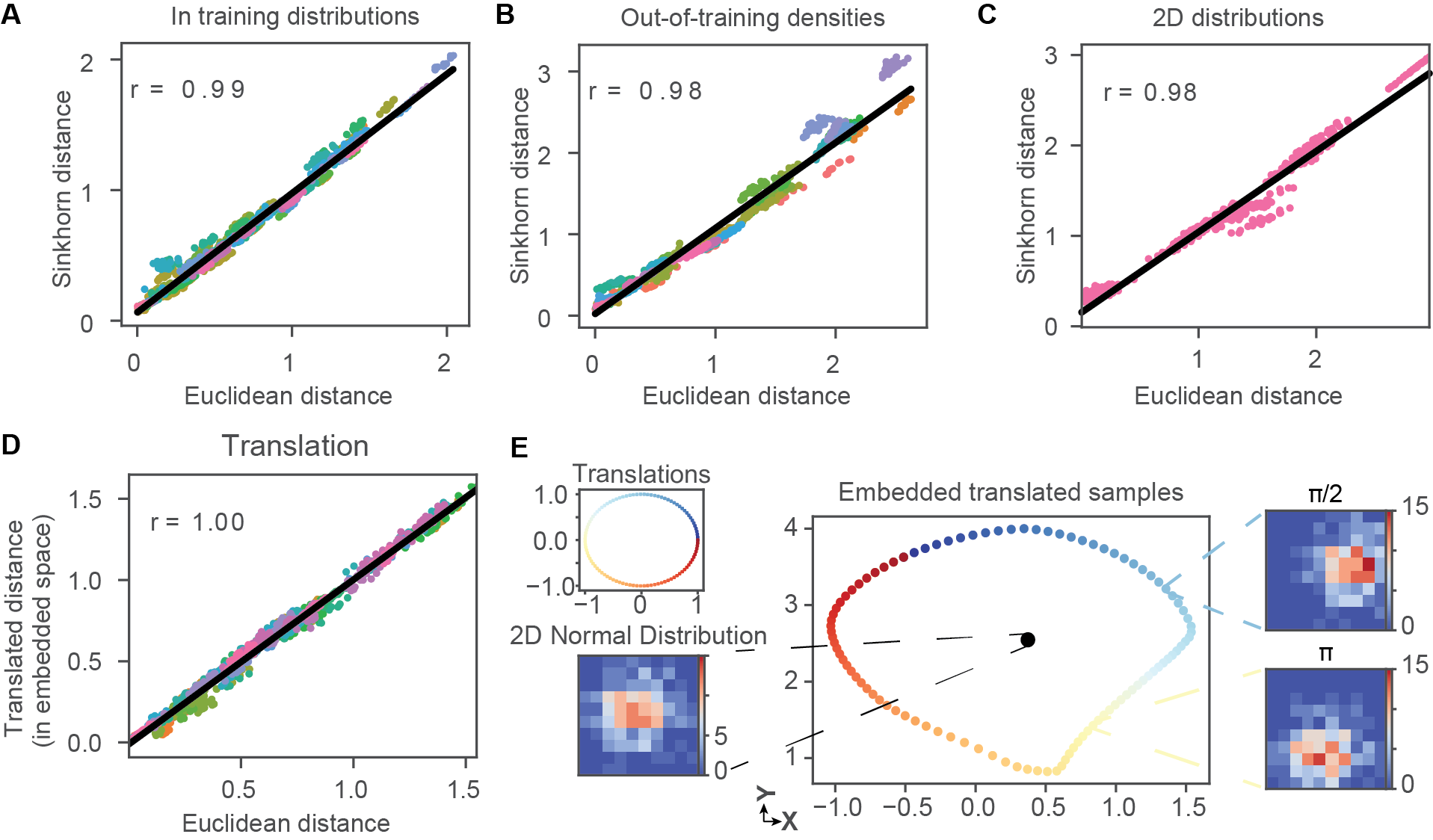}
  \caption[0.5\textwidth]{(\textbf{A}--\textbf{C}) Pearson's r correlation coefficient for  association between embedded and Sinkhorn distances (color code in Appendix A). (\textbf{D}) Correlation after translations. (\textbf{E}) Samples from a multivariate normal distribution translated around a circular path.}
  \label{fig:correlationplot}
\end{figure}
\textbf{Generalizing to out-of-sample-densities :} We also show that our model can generalize well to densities that are out of our training set. These densities are primarily constructed from the training densities but by changing the parameters (Fig~\ref{fig:correlationplot} \textbf{B,C}). But even more interestingly, we found that our model can correctly measure the distance between $2$ Dirac measures and distance between $2$ Binomial densities, even though they are not a part of the training densities. \\
\textbf{Translating samples :} Given $2$ samples $X \sim \mu, Y \sim \nu$, we can translate them around by a random vector $a$, to create new samples $X':= X + a, Y':= Y + a$, under property $4$ in Theorem $3.1$, $H_{\theta}(X), H_{\theta}(Y), H_{\theta}(X'), H_{\theta}(Y')$ would form a parallelogram. Fig~\ref{fig:correlationplot}(\textbf{D}) shows the exact relationship between the distances of encoded translated samples and the encoded samples.
Furthermore, we took samples from a $2D$ Normal Distribution $N(\mu, \Sigma)$ and rotated it around by using a circle, i.e. created new samples via $X' := X + (\text{cos}(\phi), \text{sin}(\phi))$ and we found that the encoded translated samples also formed a circular pattern around the original encoded sample. Thus our simple examples show that our metric preserves the translation invariance property and some geometry of the space (Figure~\ref{fig:correlationplot}E). \\
\textbf{Learning statistical properties of the measures :} Surprisingly for encoded $1D$-distributions, we found the strong correlation between means (and variances) of the distributions and the $x$-coordinate (and y-coordinate) of the encoded point( Fig~\ref{fig:ax_barycenter}\textbf{A,B}). That explains why the encoded Dirac distribution at $0$ and Normal distribution with mean $0$ and standard deviation $\sigma$ has $x$-coordinates close to each other. An open question and an interesting future work will to be understand if we can capture higher moments as we increase the output dimension. \\
\textbf{Respecting the topology of the space :}
We know that the Dirac delta measure is the limit of Gaussian measures under the weak convergence of measures. Choosing samples drawn from $N(0, 1/n)$ we can see that our encoded points converge to the point encoded by the Dirac measure (Fig~\ref{fig:ax_barycenter}\textbf{C}). This gives us an empirical evidence that our neural network may be continuous with respect to the Wasserstein metric.\\
\textbf{Wasserstein barycenters :} Given two densities $\mu_1, \mu_2$, if $\hat{\mu}$ is their Wasserstein barycenter~\cite{anderes2015discrete, barycenterwass, zemel2017frchet, karchermean, karcherriemannian}, our aim is to show that $H_{\theta}(\hat{\mu})$ can be approximated by the midpoint of the line joining $H_{\theta}(\mu_1)$ and $H_{\theta}(\mu_2)$. Fig~\ref{fig:ax_barycenter}\textbf{D} shows the following examples of this claim: \textbf{1)} Samples drawn from $N(0,.1)$ and $N(1,.1)$. \textbf{2)} Dirac at $0$ and $1$. \textbf{3)} Uniform distribution in $[0,.1]$ and in $[.8,.1]$. 

\begin{figure}[ht!]
  \centering
  \includegraphics[width=\linewidth]{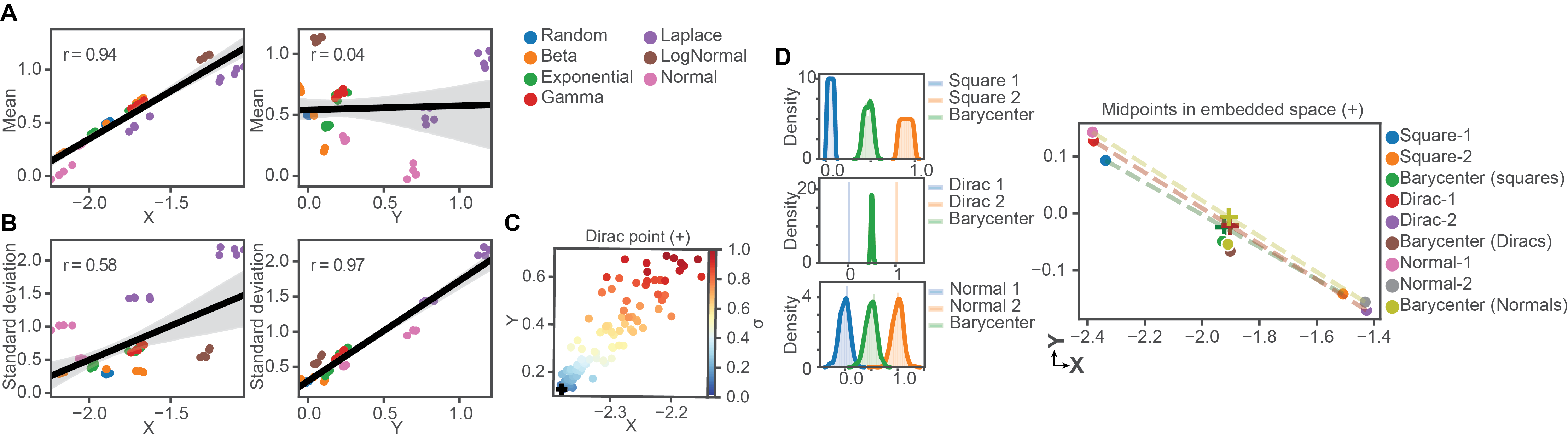}
  \caption[0.01\textwidth]{Person's r comparing embedding axes to means (\textbf{A}) and standard deviations (\textbf{B}). (\textbf{C}) Convergence of samples from Normal distributions with various standard deviations to the Dirac distribution encoding. (\textbf{D}) Barycenters of  distributions (left) and midpoints drawn between lines connecting the encoded samples (right).}
  \label{fig:ax_barycenter}
\end{figure}
We also note that none of the measures used above are in the set of our training measures. And finally observe that the figure also shows the correlation between $x$-coordinates and means of the chosen measures. Finally, the experiment also show that we can approximate the Wasserstein geodesic by straight lines in our encoded space. 

\section{Effect of the regularizers}
In this section we show the effect of the regularizers in generalizing to unseen distributions and also how well our model learns the Translation law. We show that by just adding the regularizer that enforces the Scaling law, we get better performance in learning the Translation property and generalization to out of sample distributions than the vanilla model which has no regularizers.  
\begin{figure}[h!]
  \centering
  \includegraphics[width=\linewidth]{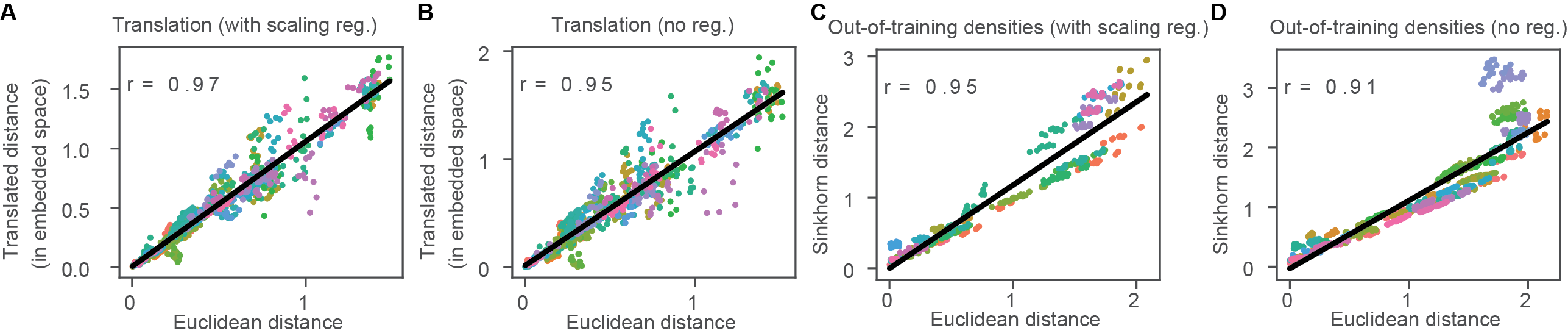}
  \caption[0.01\textwidth]{Pearson's r correlation to compare distances using model trained with (\textbf{A}) scaling regularizer, (\textbf{B}) no regularizer. Out of sample distribution generalization plots for models trained with (\textbf{C}) scaling regularizer, (\textbf{D}) no regularizer. }
  \label{fig:ablation}
\end{figure}

\section{Conclusion and Future Work}
In this work we showed that we learned a metric by approximating the Wasserstein distance by Sinkhorn distance that obeys the translation invariance and also generalizes to some unseen measures. For $1D$ measures, we found strong correlation between the encoded vectors $x$ coordinates (resp. y-coordinates) with means (variance) of the samples. We are excited by these toy results and would like to prove continuity properties of our neural network. We have also shown our model in general performs better when trained with $W_1$ metric and we would like to understand what role does the topology of the Wasserstein space play in the difference in performances of our model. Finally we would like to investigate if our model can learn higher moments as we increase the output dimension and finally to quantify the distortion of the original Wasserstein metric by our embedding.

\section*{Acknowledgements}
The first author wants to thank Alexander Cloninger for helpful suggestions and for suggesting to study the geometry of the Wasserstein space by simple translations and scalings. The authors would also like to thank the anonymous reviewers for helpful comments and suggestions. 

\nocite{*}
\bibliographystyle{unsrt}
\bibliography{ref}

\begin{thebibliography}{10}

\bibitem{Bobkov2019OnedimensionalEM}
S.~Bobkov and M.~Ledoux.
\newblock One-dimensional empirical measures, order statistics, and
  {K}antorovich transport distances.
\newblock {\em Memoirs of the American Mathematical Society}, 261:0--0, 2019.

\bibitem{spaceprob}
Luigi Ambrosio, Nicola Gigli, and Giuseppe Savare.
\newblock {\em {Gradient Flows in Metric Spaces and in the Space of Probability
  Measures}}.
\newblock Birkhauser, 01 2005.

\bibitem{bigot2013geodesic}
Jérémie Bigot, Raúl Gouet, Thierry Klein, and Alfredo López.
\newblock Geodesic {PCA} in the {W}asserstein space, 2013.

\bibitem{canas2012learning}
Guillermo~D. Canas and Lorenzo Rosasco.
\newblock {Learning Probability Measures with respect to Optimal Transport
  Metrics}, 2012.

\bibitem{delbarrio1999}
Eustasio del Barrio, Evarist Giné, and Carlos Matrán.
\newblock Central limit theorems for the wasserstein distance between the
  empirical and the true distributions.
\newblock {\em Ann. Probab.}, 27(2):1009--1071, 04 1999.

\bibitem{givens1984}
Clark~R. Givens and Rae~Michael Shortt.
\newblock A class of wasserstein metrics for probability distributions.
\newblock {\em Michigan Math. J.}, 31(2):231--240, 1984.

\bibitem{VillaniOT1}
Cedric Villani.
\newblock {Topics in Optimal Transportation}, 2003.

\bibitem{VillaniOT2}
Cedric Villani.
\newblock {Optimal Transport: Old and New}, 2008.

\bibitem{arjovsky2017wasserstein}
Martin Arjovsky, Soumith Chintala, and Léon Bottou.
\newblock Wasserstein {G}an, 2017.

\bibitem{altschuler2018nearlinear}
Jason Altschuler, Jonathan Weed, and Philippe Rigollet.
\newblock Near-linear time approximation algorithms for {O}ptimal {T}ransport
  via {S}inkhorn iteration, 2018.

\bibitem{cuturi2013sinkhorn}
Marco Cuturi.
\newblock {Sinkhorn Distances: Lightspeed Computation of Optimal Transportation
  Distances}, 2013.

\bibitem{peyre2020computational}
Gabriel Peyr\'e and Marco Cuturi.
\newblock Computational {O}ptimal {T}ransport, 2020.

\bibitem{2wasnipsgauss}
Anton Mallasto and Aasa Feragen.
\newblock {Learning from Uncertain Curves: The 2-Wasserstein Metric for
  Gaussian Processes}.
\newblock In {\em Proceedings of the 31st International Conference on Neural
  Information Processing Systems}, NIPS'17, page 5665–5674, Red Hook, NY,
  USA, 2017. Curran Associates Inc.

\bibitem{takatsu2011}
Asuka Takatsu.
\newblock Wasserstein geometry of gaussian measures.
\newblock {\em Osaka J. Math.}, 48(4):1005--1026, 12 2011.

\bibitem{courty2017learning}
Nicolas Courty, R\'emi Flamary, and M\'elanie Ducoffe.
\newblock Learning {W}asserstein {E}mbeddings, 2017.

\bibitem{frogner2019learning}
Charlie Frogner, Farzaneh Mirzazadeh, and Justin Solomon.
\newblock Learning {E}mbeddings into {E}ntropic {W}asserstein {S}paces, 2019.

\bibitem{Santambrogio2015}
Filippo Santambrogio.
\newblock {\em Wasserstein distances and curves in the Wasserstein spaces},
  pages 177--218.
\newblock Springer International Publishing, Cham, 2015.

\bibitem{statwass}
Victor~M. Panaretos and Yoav Zemel.
\newblock {Statistical Aspects of Wasserstein Distances}.
\newblock {\em Annual Review of Statistics and Its Application}, 6(1):405--431,
  2019.

\bibitem{Sklar1959FonctionsDR}
M.~Sklar.
\newblock Fonctions de repartition a n dimensions et leurs marges.
\newblock {\em {Publications de l’Institut Statistique de l’Universit\'e de
  Paris}}, pages 229--231, 1959.

\bibitem{zaheer2017deep}
Manzil Zaheer, Satwik Kottur, Siamak Ravanbakhsh, Barnabas Poczos, Ruslan
  Salakhutdinov, and Alexander Smola.
\newblock Deep {S}ets, 2017.

\bibitem{Koch2015SiameseNN}
Gregory Koch, Richard Zemel, and Ruslan Salakhutdinov.
\newblock Siamese neural networks for one-shot image recognition.
\newblock In {\em ICML Deep Learning Workshop, Vol 2}, 2015.

\bibitem{wang2015review}
Jane-Ling Wang, Jeng-Min Chiou, and Hans-Georg Mueller.
\newblock {Review of Functional Data Analysis}, 2015.

\bibitem{anderes2015discrete}
Ethan Anderes, Steffen Borgwardt, and Jacob Miller.
\newblock Discrete {W}asserstein {B}arycenters: {O}ptimal {T}ransport for
  {D}iscrete {D}ata, 2015.

\bibitem{barycenterwass}
Martial Agueh and Guillaume Carlier.
\newblock Barycenters in the {W}asserstein {S}pace.
\newblock {\em SIAM Journal on Mathematical Analysis}, 43(2):904--924, 2011.

\bibitem{zemel2017frchet}
Yoav Zemel and Victor~M. Panaretos.
\newblock {Fr\'echet Means and Procrustes Analysis in Wasserstein Space}, 2017.

\bibitem{karchermean}
H.~Karcher.
\newblock Riemannian center of mass and mollifier smoothing.
\newblock {\em Communications on Pure and Applied Mathematics}, 30(5):509--541,
  1977.

\bibitem{karcherriemannian}
Hermann Karcher.
\newblock {Riemannian Center of Mass and so called karcher mean}, 2014.

\bibitem{wasserman2016topological}
Larry Wasserman.
\newblock {Topological Data Analysis}, 2016.

\bibitem{statsinwass}
Victor Panaretos and Yoav Zemel.
\newblock {\em {An Invitation to Statistics in Wasserstein Space}}.
\newblock {Springer Briefs in Probability and Mathematical Statistics}, 01
  2020.

\bibitem{Kloeckner2014}
Benoît~R. Kloeckner.
\newblock A geometric study of wasserstein spaces: Ultrametrics.
\newblock {\em Mathematika}, 61(1):162–178, May 2014.

\bibitem{frogner2015learning}
Charlie Frogner, Chiyuan Zhang, Hossein Mobahi, Mauricio Araya-Polo, and Tomaso
  Poggio.
\newblock Learning with a {W}asserstein {L}oss, 2015.

\bibitem{bonneel}
Nicolas Bonneel, Gabriel Peyr{\'e}, and Marco Cuturi.
\newblock {Wasserstein Barycentric Coordinates: Histogram Regression Using
  Optimal Transport}.
\newblock {\em {ACM Transactions on Graphics}}, 35(4):71:1--71:10, April 2016.

\bibitem{kontorovich}
L.~Kantorovich.
\newblock On the {T}ranslocation of {M}asses.
\newblock {\em Journal of Mathematical Sciences}, 133, 03 2006.

\bibitem{givenyphthesis}
Aude Genevay.
\newblock Entropy-regularized optimal transport for machine learning.
\newblock PhD Thesis.

\bibitem{feydy2018interpolating}
Jean Feydy, Thibault Séjourné, François-Xavier Vialard, Shun ichi Amari,
  Alain Trouvé, and Gabriel Peyré.
\newblock Interpolating between {O}ptimal {T}ransport and mmd using {S}inkhorn
  {D}ivergences, 2018.

\bibitem{neumayer2020optimal}
Sebastian Neumayer and Gabriele Steidl.
\newblock From {O}ptimal {T}ransport to {D}iscrepancy, 2020.

\bibitem{clevert2016fast}
Djork-Arné Clevert, Thomas Unterthiner, and Sepp Hochreiter.
\newblock Fast and {A}ccurate {D}eep {N}etwork {L}earning by {E}xponential
  {L}inear {U}nits (elus), 2016.

\bibitem{cuturi2014fast}
Marco Cuturi and Arnaud Doucet.
\newblock Fast computation of {W}asserstein {B}arycenters, 2014.

\bibitem{approxearthmover}
S.~{Shirdhonkar} and D.~W. {Jacobs}.
\newblock Approximate earth mover’s distance in linear time.
\newblock In {\em 2008 IEEE Conference on Computer Vision and Pattern
  Recognition}, pages 1--8, 2008.

\bibitem{comparisonofw1andw2}
Fernando de~Goes, Katherine Breeden, Victor Ostromoukhov, and Mathieu Desbrun.
\newblock Blue {N}oise through {O}ptimal {T}ransport.
\newblock {\em ACM Trans. Graph.}, 31(6), November 2012.

\end{thebibliography}
\newpage
\appendix
\section{Additional Figures}
\begin{figure}[t]
\includegraphics[width=\linewidth]{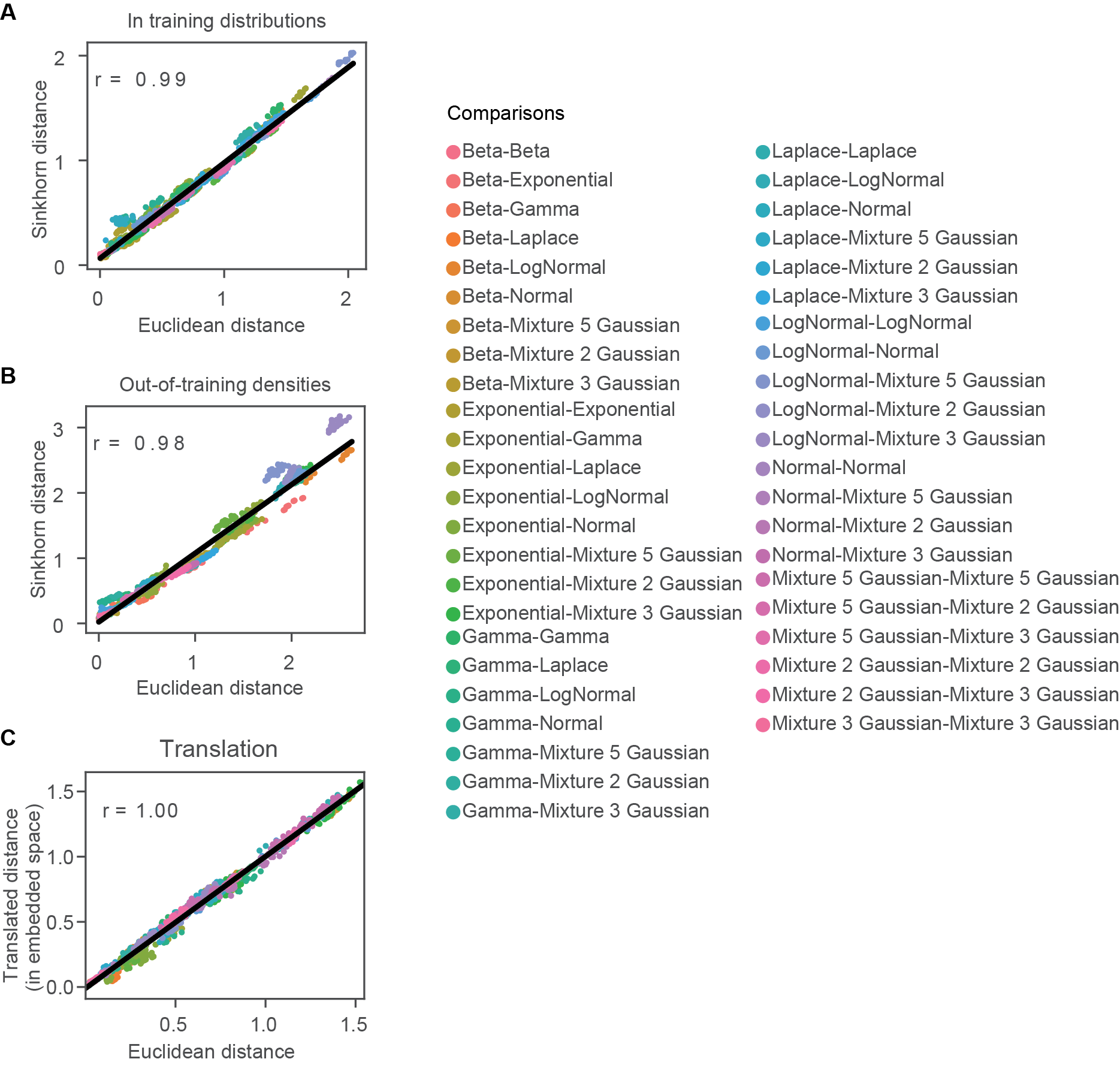}
  \caption[0.95\textwidth]{Pearson's r correlation coefficient for  association between embedded and Sinkhorn distances}
  \label{fig:corrplot}
\end{figure}

In this section we show some additional figures: The correlation plots colored by the densities (similar to Fig 2, in the main text, but with full legend). We also show our results for the same experiments discussed earlier with the $W_2$ metric (Fig~\ref{fig:w2_expt}. We found a strong correlation between the variances of the densities and the $y$-coordinates of the encoded points. However unlike in the $W_1$ case, we found no such relations between the means and the $x$-coordinates. Finally we show in Figure~\ref{fig:scale} how our network has learnt to respect the scaling law (\textbf({iii}) in Theorem~\ref{wass_prop}).  
\begin{figure}[h!]
\includegraphics[width=\linewidth]{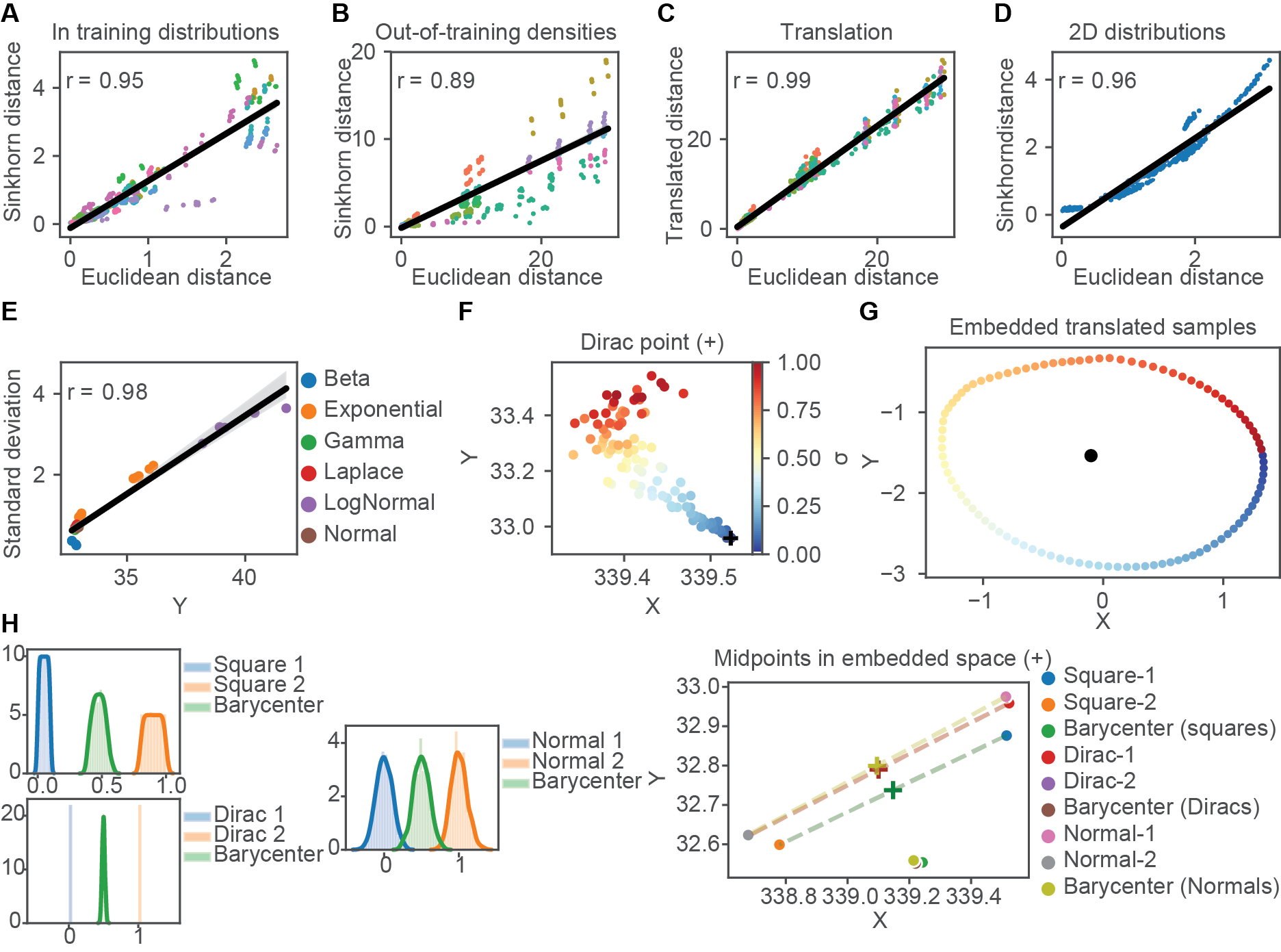}
  \caption[0.95\textwidth]{Experiments for network train to measure $W_2$. (\textbf{A}--\textbf{D}) Correlations with $W_2$ and embedded distances, for 1D distributions (\textbf{A}, \textbf{B}) under translations (\textbf{C}) and for 2D Normal distributions (\textbf{D}). (\textbf{E}--\textbf{F}) Interpretation of embedding axes showing Pearson's r correlation between standard deviation and y-axis (\textbf{E}) and convergence of samples from 1D Normal distributions with various standard deviations to encoded sample of the Dirac distribution (\textbf{F}). (\textbf{G}) Samples of 2D Normal distribution translated around a circle with black dot representing un-translated embedding. (\textbf{H}) Barycenters of distributions (left) and midpoints drawn between lines connecting the encoded samples (right) after training the network on $W_2$ distances.}
  \label{fig:w2_expt}
\end{figure}

\begin{figure}[h!]
\centering
\includegraphics[width=\linewidth]{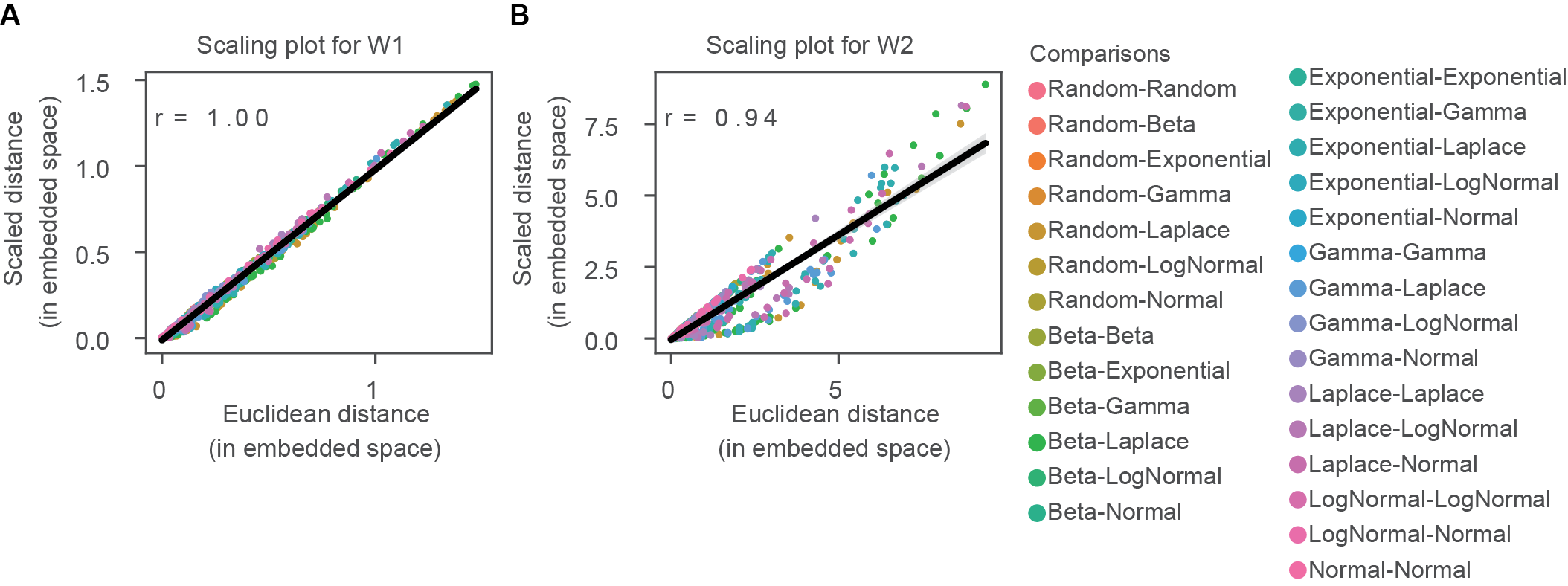}
\caption[0.95\textwidth]{Correlation after scaling; empirically validating property (\textbf{iii}) in Theorem~\ref{wass_prop}, (\textbf{A}) Network trained with $W_1$ metric, (\textbf{B}) Network trained with $W_2$ metric }
\label{fig:scale}
\end{figure}

\section{Optimal Transport}
 The optimization problem defining the distance (Equation~\ref{eqn:ot}) is popularly known as optimal transport or the Monge–Kantorovich problem. The Kantorovich formulation~\cite{kontorovich} of the transportation problem is:
\begin{equation}
\text{OT}(\mu, \nu) :=  \min_{\pi \in \Pi(\mu,\nu)} \int_{\mathcal{X}\times \mathcal{X}} c(x, y) d\pi(x, y)
\end{equation}
where $c(\cdot, \cdot) : \mathcal{X} \times \mathcal{X} \rightarrow \mathbb{R}$ is a cost function and the set of couplings $\Pi(\mu, \nu)$ consists of joint probability distributions over the product space $\mathcal{X} \times \mathcal{X}$
with marginals $\mu$ and $\nu$, 
\begin{equation}
\Pi(\mu,\nu) := \{\pi \in \mathbb{P}(\mathcal{X}\times \mathcal{X} ) : P_1\#\pi = \mu, P_2\#\pi = \nu\}. 
\end{equation}
where $P_i$ are the projection maps from $\mathcal{X}\times \mathcal{X}$ to $i$th factor of $\mathcal{X}$ and $P_i \# \pi$ is the pushforward of the measure $\pi$ onto $\mathcal{X}$. The cost function generally reflects the metric of the space $\mathcal{X}$ and in our case is just $c(x,y) := \lvert\rvert x-y\lvert\rvert_p^{1/p}$ for some $p \geq 1.$
However as noted in the main text, solving the above problem scales cubically on the sample sizes and is extremely difficult in practice. Adding an entropy regularization leads to a problem that can be solved much more efficiently~\cite{cuturi2013sinkhorn, altschuler2018nearlinear, peyre2020computational}. For the convenience of the reader, let us recall the entropy regularization as in~\cite{cuturi2013sinkhorn}.  We first construct discrete measures $\mu := \sum_{i=1}^{n}   a_i\delta_{x_{i}}$ and  $\nu := \sum_{j=1}^{m}b_j\delta_{y_{j}}$ where $a, b$ are vectors in the probability simplex, and let $C$ is the cost matrix given $c_{ij} := c(x_i,y_j)$, then the optimization problem can be succinctly written as \begin{equation}\label{eqn:discrete_ot}
    W_p(\mu, \nu) = \min_{P \in U(\mu,\nu)} \sum_{i,j} P_{ij}C_{ij}
\end{equation}
where $U(\mu, \nu) = \{P \in R^{n \times m}_{+}  : P\mathbf{1}_m = a , P^{t}\mathbf{1}_n = b \}$. \\
The entropy regularized version of this problem reads:
\begin{equation}\label{eqn:entropy_ot}
    SD^{\lambda}_p(\mu,\nu) := \min_{P \in U(\mu,\nu)} \sum_{i,j}P_{ij}C_{ij} +  \frac{1}{\lambda} \sum_{i,j}P_{ij}\text{log}P_{ij}
\end{equation}
Due to the strong convexity introduced by the regularizer, the above problem now has a unique solution and can be efficiently solved by the Sinkhorn algorithm. In our work $\lambda = 10$.
\section{Some technical considerations}
Note that $SD^{\lambda}_p$ is not a true metric as it do not satisfy $SD^{\lambda}_p(X,X) = 0$, for all sets of samples~\cite{cuturi2013sinkhorn}. However it is symmetric and satisfies the triangle inequalities. We circumvent this issue by only using continuous measures during training time. This ensures that any of two sets of samples drawn a given measure are distinct with probability $1$. Thus, during training we never encounter the set $X$ twice, so a case where $||H_{\theta}(X)- H_{\theta}(X)|| = SD^{\lambda}_p(X,X)$ never arises. Thus we end up learning a metric space where the distances between \textit{different} samples are approximately equal to the Sinkhorn distance.  

\section{Wasserstein Barycenters}
Given measures $\mu_1,\cdots, \mu_N $, we define the Wasserstein barycenter as the minimizer of the functional
\begin{equation}\label{eqn:barycenter}
F[\nu] = \sum_{i=1}^{N}w_i W^{p}_p((\nu, \mu_i )
\end{equation}
where $w_i$ are some fixed weights and $\sum_{i} w_i =1$. For simplicity, we will take the weights to be $1/N$. We use the algorithm in~\cite{cuturi2014fast}, as well as the Geomloss library to compute the barycenters. \\
Another way to view our experiments with the barycenters is to measure how closely we can approximate Wasserstein geodesics via straight lines in our encoded Euclidean space. The results show a better approximation under $W_1$ metric than the $W_2$ metric. We hypothesize since $SD^{\lambda}$ discretize the space, it can still approximate $W_1$ metric since $\mathbb{P}(\mathcal{X})$ is flat. However a straight line is poor approximation for geodesics in a non-flat space, which explains our poor results for the $W_2$ metric. \\ 
Finally our experiments with the barycenters suggest a natural way to embed measures in our $2$-dimensional encoded space. Take random samples of size $N$ and repeat this process $M$ times. Our encoder will take in these $M \times N$ samples and produce $M$ points. We can take the centroid of these points and use it to get a representation of our measure. 

\section{Quantitative Results} 
In this section we report our quantitative results of our models in various tasks as well as performance of our models without the regularizers. The numbers reported below are the RMSE errors between true distances and the our calculated distances. All the experiments were run $5$ times. 
\begin{table}[h]
\label{tab:results_1}
\caption{Results of our experiments on 1D measures}
\resizebox{\columnwidth}{!}{%
\begin{tabular}{@{}ccccccc@{}}
\toprule
Tasks & \begin{tabular}[c]{@{}c@{}} Model trained \\  with $W_1$ \end{tabular} & \begin{tabular}[c]{@{}c@{}}Model trained with $W_1$ \\ (no regs) \end{tabular} &  \begin{tabular}[c]{@{}c@{}} Model trained with $W_1$ \\ (scaling reg) \end{tabular} & \begin{tabular}[c]{@{}c@{}} Model trained \\ with $W_2$ \end{tabular} &  \begin{tabular}[c]{@{}c@{}} Model trained with $W_2$\\ (no regs) \end{tabular} &  \begin{tabular}[c]{@{}c@{}} Model trained with $W_2$ \\ (scaling reg) \end{tabular} \\
\midrule
In sample densities & .01 $\pm$ .007 & .08 $\pm$ .004 & .04 $\pm$ .005 & .06 $\pm$ .008 & .1 $\pm$ .003 & .08 $\pm$ .003 \\
Out-of-sample densities & .02 $\pm$ .009 & .1 $\pm$ .004 & .07 $\pm$ .001 & .1 $\pm$ .041 & .19 $\pm$ .03  & .17 $\pm$ .06 \\
Translation property & .01 $\pm$ .004 & .07 $\pm$ .006 & .05 $\pm$ .005 & .01 $\pm$ .008 & .09 $\pm$ .004 & .07 $\pm$ .001 \\
Scaling property & .01 $\pm$ .003 & .08 $\pm$ .006 & .03 $\pm$ .002 & .05 $\pm$ .001 & .12 $\pm$ .01 & .07 $\pm$ .009\\
Barycenter Accuracy & .05 $\pm$ .008 & .09 $\pm$ .007& .08 $\pm$ .009 & .09 $\pm$ .004  & .13 $\pm$ .05 & .11 $\pm$ .07 \\ 
\bottomrule
\end{tabular}
}%
\end{table}

\begin{table}[h]
\label{tab:results_2}
\caption{Results of our experiments on 2D measures}
\resizebox{\columnwidth}{!}{%
\begin{tabular}{@{}ccccccc@{}}
\toprule
Tasks & \begin{tabular}[c]{@{}c@{}} Model trained \\  with $W_1$ \end{tabular} & \begin{tabular}[c]{@{}c@{}}Model trained with $W_1$ \\ (no regs) \end{tabular} &  \begin{tabular}[c]{@{}c@{}} Model trained with $W_1$ \\ (scaling reg) \end{tabular} & \begin{tabular}[c]{@{}c@{}} Model trained \\ with $W_2$ \end{tabular} &  \begin{tabular}[c]{@{}c@{}} Model trained with $W_2$\\ (no regs) \end{tabular} &  \begin{tabular}[c]{@{}c@{}} Model trained with $W_2$ \\ (scaling reg) \end{tabular} \\
\midrule
In sample densities & .02 $\pm$ .005 & .1 $\pm$ .001 & .06 $\pm$ .003 & .03 $\pm$ .008 & .15 $\pm$ .01 & .09 $\pm$ .009 \\
Out-of-sample densities & .05 $\pm$ .004 & .19 $\pm$ .02 & .13 $\pm$ .04 & .08 $\pm$ .01 & .2 $\pm$ .01  & .18 $\pm$ .04 \\
Translation property & .09 $\pm$ .004 & .17 $\pm$ .06 & .15 $\pm$ .02 & .04 $\pm$ .008 & .19 $\pm$ .04 & .1 $\pm$ .04 \\
Scaling property & .07 $\pm$ .01 & .19 $\pm$ .02 & .14 $\pm$ .02 & .08 $\pm$ .009 & .21 $\pm$ .01 & .11 $\pm$ .07\\
\bottomrule
\end{tabular}
}
\end{table}
The results on translation and scaling properties are calculated on both in sample densities and out of sample densities. The performance on both the datasets are similar so we pooled all the results and reported the average scores and the standard deviations (10 trials, 5 with in-training densities and 5 with out-of-sample densities). 

\section{Sampling sizes} The size of the sample plays a crucial role here. What is the right size of samples to pick? If the size of the samples $X \sim \mu, Y \sim \nu $ are large then our method works well. But picking a large sample size is computationally very expensive. We found a sample size of $500$ yields good results while a sample size of below $100$ yields inconsistent results (variance is high). 
\end{document}